\documentclass[11pt]{article}

\newif\ifanonymized
\anonymizedfalse    

\usepackage[margin=1in]{geometry}
\usepackage[T1]{fontenc}
\usepackage[utf8]{inputenc}
\usepackage{lmodern}
\usepackage{microtype}

\usepackage{amsmath,amssymb,amsfonts,amsthm,mathtools}
\usepackage{graphicx}
\usepackage{booktabs}
\usepackage{enumitem}
\usepackage[numbers,sort&compress]{natbib}
\usepackage[hidelinks]{hyperref}
\usepackage[nameinlink,capitalize,noabbrev]{cleveref}

\usepackage{microtype}
\usepackage{hyperref}
\usepackage{url}
\usepackage{booktabs}
\usepackage{tikz}
\usepackage{float}
\usepackage{algorithm}
\usepackage{algpseudocode}
\usepackage{amsmath,amssymb,amsthm,mathtools}
\usepackage{xcolor}
\usepackage{enumitem}
\usepackage{subcaption}
\usepackage{xspace}

\newcommand{\github}{\raisebox{-1.3pt}{\includegraphics[height=1.05em]{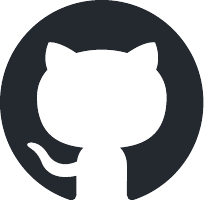}}\xspace}
\newcommand{\worldwideweb}{\raisebox{-1.3pt}{\includegraphics[height=1.05em]{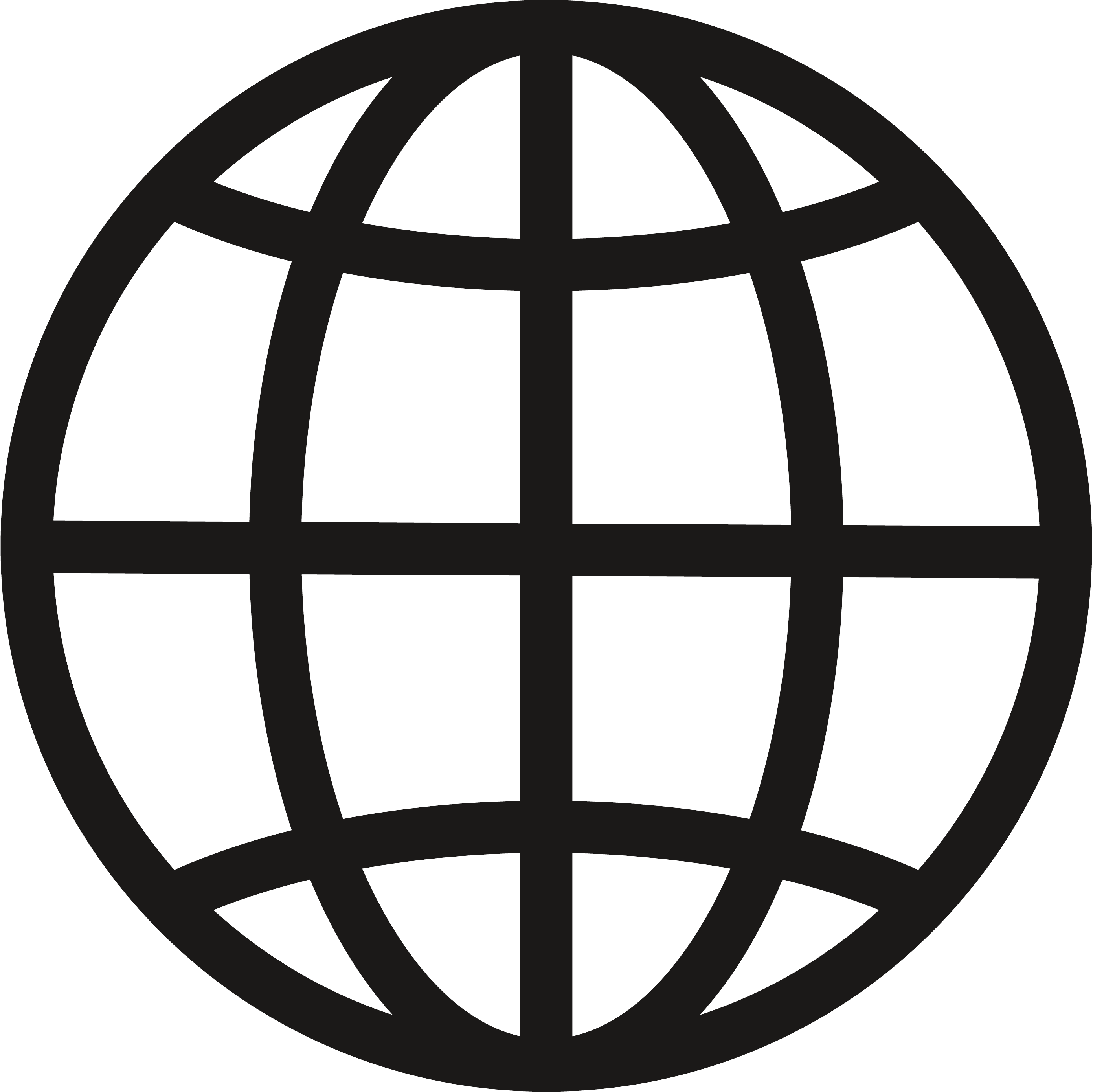}}\xspace}

\usetikzlibrary{arrows.meta}

\newcommand{\E}{\mathbb{E}}
\newcommand{\vocab}{\mathcal{V}}

\newtheorem{lemma}{Lemma}
\newtheorem{proposition}{Proposition}

\theoremstyle{definition}

\theoremstyle{remark}
\newtheorem{remark}{Remark}

\title{Accelerating Speculative Decoding with Block Diffusion Draft Trees}
\date{}

\author{
Liran Ringel$^{1}$ \and Yaniv Romano$^{1,2}$
}

\begin{document}

\maketitle

\begingroup
\renewcommand\thefootnote{}
\footnotetext{
$^{1}$Department of Computer Science, Technion -- Israel Institute of Technology.
$^{2}$Department of Electrical and Computer Engineering, Technion -- Israel Institute of Technology.
Correspondence to: Liran Ringel <\texttt{liranringel@cs.technion.ac.il}>.
}
\endgroup

\vspace{-2.5em}

\begin{abstract}
\noindent Speculative decoding accelerates autoregressive language models by using a lightweight drafter to propose multiple future tokens, which the target model then verifies in parallel. DFlash shows that a block diffusion drafter can generate an entire draft block in a single forward pass and achieve state-of-the-art speculative decoding performance, outperforming strong autoregressive drafters such as EAGLE-3. Vanilla DFlash, however, still verifies only a single drafted trajectory per round, potentially limiting its acceptance length. We introduce DDTree (Diffusion Draft Tree), a method that constructs a draft tree directly from the per-position distributions of a block diffusion drafter. Under a fixed node budget, DDTree uses a simple best-first heap algorithm to select the continuations that are most likely to match the target model according to a surrogate defined by the draft model's output. The resulting tree is verified efficiently in a single target model forward pass using an ancestor-only attention mask. Because DDTree builds on DFlash, a leading draft model for speculative decoding, these gains place DDTree among the leading approaches to speculative decoding.
\end{abstract}

\vspace{-1.4em}

\begin{center}
{\normalfont\fontsize{10}{13}\selectfont
  \href{https://liranringel.github.io/ddtree}{\worldwideweb~Project Page}
  \hspace{0.9cm}
  \href{https://github.com/liranringel/ddtree}{\github~Code}
}
\end{center}

\vspace{-1em}

\begin{figure}[H]
\centering
\includegraphics[width=0.8\linewidth]{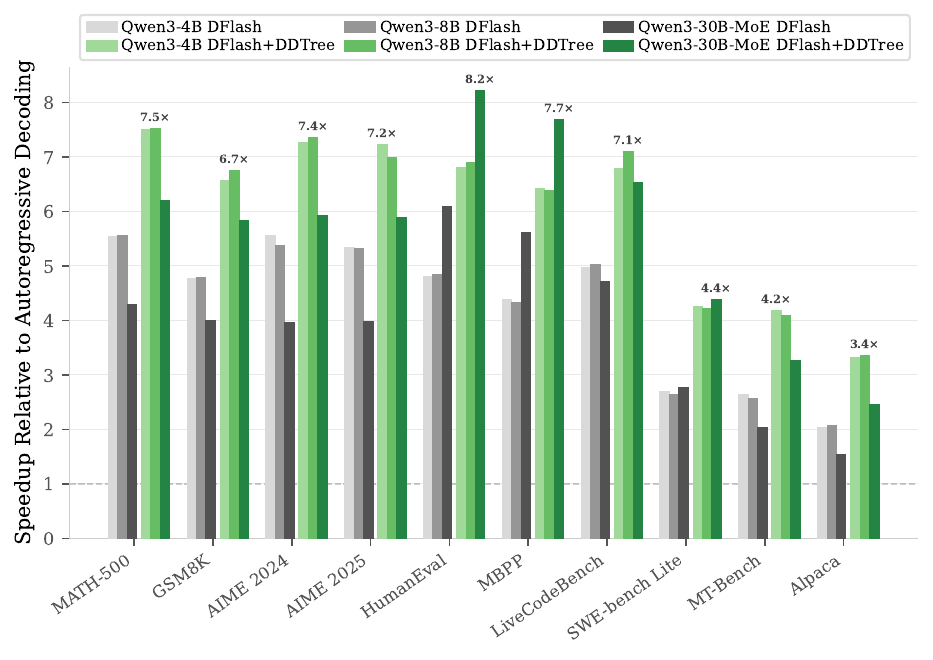}
\caption{Speedups relative to autoregressive decoding at temperature 0.0 across datasets and target models. DDTree bars use the best tree-node budget for each dataset-model pair.}
\label{fig:speedup-overview}
\end{figure}

\section{Introduction}
Autoregressive language models generate text one token at a time, so the sampling of each new token requires another forward pass through a large model. This sequential dependence makes decoding a major source of inference latency. Speculative decoding addresses this bottleneck by using a lightweight drafter model to propose several future tokens and a large target model to verify them in parallel, while preserving the target model's output distribution \citep{leviathan2023fast, chen2023accelerating}. As language models continue to grow in size, reducing decoding latency without changing model outputs has made speculative decoding an increasingly important technique for inference.

The effectiveness of speculative decoding depends on the quality of the draft model. To deliver end-to-end speedups, the drafter must be cheap enough that drafting overhead stays small and accurate enough that the target model frequently accepts the drafted tokens. At a high level, the objective is to maximize the expected number of drafted tokens that the target model accepts, i.e., tokens that agree with the target model and can therefore be committed. At the same time, this should be achieved without adding significant overhead that makes the speedup disappear.

Block diffusion \citep{arriola2025block} is especially attractive in this setting as it can generate an entire draft block in a single forward pass. DFlash \citep{chen2026dflash} shows the promise of this approach: it uses a small block diffusion drafter that leverages features derived from the larger target model, achieving state-of-the-art speculative decoding performance. Indeed, DFlash has been shown to outperform strong autoregressive drafters such as EAGLE-3 \citep{li2025eagle3}. These results establish block diffusion as a powerful foundation for speculative decoding. At the same time, these promising results spark a key challenge:

{\centering \emph{How should we make the best use of the information that the block diffusion draft model produces?}}

Currently, DFlash verifies only one drafted trajectory per round, even though a single block diffusion forward pass produces a distribution over tokens at each future position. As such, DFlash does not utilize the many plausible continuations that block diffusion produces. Naturally, exploring more continuations could increase the probability that the target model continues along a drafted path. On the other hand, naively using multiple continuations increases the verifier cost and can erase the latency benefit.
The challenge, therefore, is to use the draft model's per-position distributions to choose the continuations that are most worthwhile to verify.

We address this challenge with DDTree (Diffusion Draft Tree). Our method (i) constructs a draft tree directly from the per-position distributions produced by a block diffusion drafter, (ii) selects a compact set of promising continuations under a specified tree-node budget, and (iii) verifies them in a single target model forward pass with tree attention. In this way, DDTree preserves DFlash's low drafting latency while allowing each round to explore multiple continuations instead of only one. Our experiments show that DDTree consistently improves over vanilla DFlash across models, and Figure~\ref{fig:speedup-overview} summarizes these gains.

\paragraph{Contributions.}
\begin{itemize}[leftmargin=1.5em]
    \item We introduce DDTree, a speculative decoding method that constructs a draft tree directly from the per-position distributions produced by a single block diffusion forward pass within a fixed node budget.
    \item We show that the tree we construct provably maximizes the expected number of accepted tokens under the draft model, which we use as a surrogate for the target model's expected acceptance length. We also show that the optimal tree can be recovered with a simple and efficient best-first heap algorithm. This result builds on OPT-Tree~\citep{wang2025opt}, but adapts it to the block diffusion setting, where the tree is constructed from the per-position distributions of a single forward pass rather than the multiple forward passes required by autoregressive drafters.
    \item We implement DDTree on top of the Hugging Face Transformers library and show consistent gains over vanilla DFlash across model sizes and domains.
\end{itemize}

\section{Related Work}

\textbf{Speculative decoding and tree-based verification.}
Foundational speculative decoding methods verify a drafted continuation against the target model in parallel and preserve the target model's output distribution \citep{leviathan2023fast, chen2023accelerating}. A subsequent line of work generalizes this setting from a single drafted continuation to a tree of candidate continuations \citep{spector2023accelerating, jeon2024recursive, xiong2025dyspec}. SpecInfer by~\citet{miao2023specinfer} introduces tree attention for efficient target-model verification over token trees, and Medusa~\citep{cai2024medusa} combines multiple prediction heads with the same style of tree-based verification. The EAGLE family extends this direction using target-model features: EAGLE drafts in feature space \citep{li2024eagle}, EAGLE-2 adds dynamic draft-tree construction \citep{li2024eagle2}, and EAGLE-3 predicts tokens directly from fused multi-layer features \citep{li2025eagle3}. OPT-Tree by~\citet{wang2025opt} constructs adaptive trees for autoregressive drafters by maximizing an approximate expected acceptance length under a node budget. In that autoregressive setting, tree construction still requires one drafter's forward pass per tree depth (i.e., per token position), and thus it has higher computational overhead compared to our DDTree approach.

\textbf{Parallel drafting and block diffusion drafters.}
A complementary line of work reduces drafting latency by predicting multiple future tokens in a single forward pass. Block Diffusion provides the modeling foundation for prefix-conditioned blockwise denoising with KV-cache-friendly generation~\citep{arriola2025block}. PARD~\citep{an2026pard} studies low-cost parallel drafting by adapting autoregressive models to mimic diffusion-style block prediction. DFlash~\citep{chen2026dflash} shows that a small block diffusion drafter, conditioned on target-model features, can predict an entire block in one forward pass and then verify a single drafted trajectory losslessly.

Vanilla DFlash, however, explores only one continuation per round. A very recent work, DART~\citep{liu2026dart}, also constructs draft trees from one-pass parallel logits. Still, it relies on continuity-aware tree pruning with an external $N$-gram continuity score and a large $N$-gram trie at runtime~\citep{liu2026dart}. By contrast, our proposed DDTree keeps the same one-pass DFlash drafter and constructs the tree directly from the per-position probabilities produced by that pass. This avoids auxiliary external scoring and gives an explicit surrogate objective that our best-first construction provably maximizes.

\section{Background: Block Diffusion Drafting}

Speculative decoding methods proceed in rounds. At the beginning of each round, we already have one new token produced by the target model; we denote this `bonus' token by $b$. This token is guaranteed to exist, even if no drafted tokens were accepted in the previous round. Importantly, although $b$ has been selected by the target model, the target model has not yet been run on $b$, i.e., no forward pass has been performed with $b$ appended to the context. As a result, the drafter can condition on the identity of 
$b$, but any target-model features it uses, as in DFlash, are only available for the preceding context.

With \(b\) fixed, the drafter's task is to predict the next tokens after \(b\). A block diffusion drafter does this in parallel, producing a short block of future tokens rather than generating them autoregressively one token at a time.
Given the context and \(b\), the drafter takes a masked block of the form \([b, m, \ldots, m]\) and predicts tokens for the next \(L\) masked positions in one forward pass. This produces logits
\[
\ell_i \in \mathbb{R}^{|\vocab|}, \qquad i=1,\dots,L,
\]
or, equivalently, per-position token distributions
\[
q_i(v)=\operatorname{softmax}(\ell_i)_v, \qquad v\in\vocab.
\]
Crucially, each \(q_i\) is a \emph{marginal} distribution for position \(i\), not a path-conditioned distribution. Because the block is predicted in parallel, the prediction at position \(i\) conditions on the context before the drafted block (and, in DFlash, on target-model features), but not on the specific token choices at positions \(1,\dots,i-1\) inside that same block. Thus, a single DFlash pass provides one marginal distribution per future position, rather than conditional probabilities for every partial continuation.

This distinction is central to DDTree. Let \(c\) denote the current context, let \(b\) denote the bonus token available at the start of the round, and let \(y_{1:L}\) denote a candidate continuation after \(b\). Under the target model, the continuation distribution factorizes autoregressively as
\begin{equation}
    p(y_{1:L} \mid c,b)
    =
    \prod_{i=1}^{L} p(y_i \mid c,b,y_{1:i-1}).
\label{eq:target-factorization}
\end{equation}
A one-pass block diffusion drafter does not expose these continuation-conditioned factors. Instead, it provides only per-position marginals \(\{q_i\}_{i=1}^L\), where each \(q_i(\cdot \mid c,b)\) predicts the token at position \(i\) from the shared conditioning context \((c,b)\), without conditioning on the realized tokens at earlier positions within the same block. Accordingly, the natural distribution associated with a one-pass block diffusion drafter is the factorized distribution
\begin{equation}
    Q(y_{1:L}\mid c,b)
    :=
    \prod_{i=1}^{L} q_i(y_i\mid c,b).
\label{eq:factorized-approx}
\end{equation}
Thus, the target model provides a path-conditioned autoregressive distribution, whereas the drafter provides only a factorized distribution over the next \(L\) positions.

Our proposed DDTree builds on the factorized distribution~\eqref{eq:factorized-approx}: we use it to define the surrogate objective for selecting a draft tree from a single block diffusion drafter forward pass. As in standard speculative decoding with a block diffusion drafter, DDTree uses one lightweight drafter pass followed by target-model verification. The key difference is how we use the resulting one-pass marginals \(\{q_i\}_{i=1}^L\). Rather than collapsing the marginals into a single continuation, we use them to construct a compact draft tree for verification.

\section{The Proposed Method: Diffusion Draft Tree}
\subsection{Overview}
DDTree builds a draft tree under a node budget $B$, in which a node at depth $i$ represents a candidate token for the $i$-th future position.
Ideally, in each decoding round, we would choose a draft tree with $B$ nodes that maximizes the expected number of speculative tokens accepted by the target model.
A single block diffusion drafter forward pass outputs, in parallel, one drafter-predicted marginal token distribution for each future position in the next block. It does not provide the target-conditioned continuation probabilities in~\eqref{eq:target-factorization} needed to optimize the expected target-model acceptance length directly.
We therefore can only optimize a surrogate objective, which is the expected acceptance length under the drafter's factorized approximation~\eqref{eq:factorized-approx}. In Section~\ref{sec:surrogate-objective}, we formalize this surrogate and derive the corresponding tree-construction objective.

At each decoding round, let $c$ denote the full current context, including the prompt and all previously generated tokens. Recall that the bonus token $b$ is already obtained by the target model: in the first round it comes from the prefill pass, and, in later rounds it is the previous round's bonus token. With this in place, conditioned on $c$ and $b$, our proposed DDTree method implements speculative decoding for the next block by applying four steps:
\begin{figure}[t]
\centering

\begin{minipage}[t]{0.49\linewidth}
\centering
\begin{tikzpicture}[
    x=0.74cm,
    y=0.74cm,
    >=Latex,
    font=\scriptsize,
    rootnode/.style={circle, draw=blue!65!black, fill=blue!12, line width=0.9pt, minimum size=5.1mm, inner sep=0pt},
    draftnode/.style={circle, draw=black!65, fill=gray!10, minimum size=4.7mm, inner sep=0pt},
    contextbox/.style={draw=black!35, fill=gray!12, rounded corners=2pt, minimum width=1.45cm, minimum height=6mm}
]
    \foreach \x/\d/\dx in {1.45/1/-0.34,3.05/2/0,4.65/3/0.34} {
        \draw[gray!18] (\x,-1.9) -- (\x,1.95);
        \node[font=\scriptsize, text=black!55] at ({\x+\dx},2.2) {position \d};
    }

    \node[contextbox] (ctx) at (-2.05,0) {context};

    \node[rootnode] (r) at (0,0) {$a$};
    \node[font=\scriptsize, text=blue!70!black, align=center] at (0,-0.79) {previous\\bonus};
    \draw[-Latex, draw=black!45, line width=0.8pt] (ctx.east) -- (r.west);

    \node[draftnode] (a) at (1.45,0.7) {$b$};
    \node[draftnode] (b) at (1.45,-0.85) {$c$};
    \node[draftnode] (c) at (3.05,1.18) {$d$};
    \node[draftnode] (d) at (3.05,0.08) {$e$};
    \node[draftnode] (e) at (3.05,-1.35) {$f$};
    \node[draftnode] (f) at (4.65,1.42) {$g$};
    \node[draftnode] (g) at (4.65,0.65) {$h$};
    \node[draftnode] (h) at (4.65,-0.22) {$i$};
    \draw[draw=black!35, line width=0.7pt] (r) -- (a);
    \draw[draw=black!35, line width=0.7pt] (r) -- (b);
    \draw[draw=black!35, line width=0.7pt] (a) -- (c);
    \draw[draw=black!35, line width=0.7pt] (a) -- (d);
    \draw[draw=black!35, line width=0.7pt] (b) -- (e);
    \draw[draw=black!35, line width=0.7pt] (c) -- (f);
    \draw[draw=black!35, line width=0.7pt] (c) -- (g);
    \draw[draw=black!35, line width=0.7pt] (d) -- (h);
\end{tikzpicture}

\vspace{0.25em}
\parbox{0.94\linewidth}{\centering \textbf{(a)} Tree selected from a single block diffusion drafter forward pass}
\end{minipage}
\hfill
\begin{minipage}[t]{0.49\linewidth}
\centering
\begin{tikzpicture}[
    x=0.74cm,
    y=0.74cm,
    >=Latex,
    font=\scriptsize,
    rootnode/.style={circle, draw=blue!65!black, fill=blue!12, line width=0.9pt, minimum size=5.1mm, inner sep=0pt},
    draftnode/.style={circle, draw=black!45, fill=gray!7, minimum size=4.7mm, inner sep=0pt},
    acceptnode/.style={circle, draw=green!45!black, fill=green!18, line width=1.0pt, minimum size=4.9mm, inner sep=0pt},
    bonusnode/.style={circle, draw=orange!80!black, fill=orange!20, line width=1.0pt, minimum size=5.1mm, inner sep=0pt},
    contextbox/.style={draw=black!35, fill=gray!12, rounded corners=2pt, minimum width=1.45cm, minimum height=6mm}
]
    \foreach \x/\d/\dx in {1.45/1/-0.34,3.05/2/0,4.65/3/0.34} {
        \draw[gray!18] (\x,-1.9) -- (\x,1.95);
        \node[font=\scriptsize, text=black!55] at ({\x+\dx},2.2) {position \d};
    }

    \node[contextbox] (ctx) at (-2.05,0) {context};

    \node[rootnode] (r) at (0,0) {$a$};
    \node[font=\scriptsize, text=blue!70!black, align=center] at (0,-0.79) {previous\\bonus};
    \draw[-Latex, draw=black!45, line width=0.8pt] (ctx.east) -- (r.west);

    \node[acceptnode] (a) at (1.45,0.7) {$b$};
    \node[draftnode] (b) at (1.45,-0.85) {$c$};
    \node[draftnode] (c) at (3.05,1.18) {$d$};
    \node[acceptnode] (d) at (3.05,0.08) {$e$};
    \node[draftnode] (e) at (3.05,-1.35) {$f$};
    \node[draftnode] (f) at (4.65,1.42) {$g$};
    \node[draftnode] (g) at (4.65,0.65) {$h$};
    \node[draftnode] (h) at (4.65,-0.22) {$i$};
    \node[bonusnode] (bonus) at (5.55,-1.05) {$a'$};
    \node[font=\scriptsize, text=orange!90!black, align=center] at (5.55,-1.82) {next\\bonus};

    \draw[draw=black!25, line width=0.6pt] (r) -- (a);
    \draw[draw=black!25, line width=0.6pt] (r) -- (b);
    \draw[draw=black!25, line width=0.6pt] (a) -- (c);
    \draw[draw=black!25, line width=0.6pt] (a) -- (d);
    \draw[draw=black!25, line width=0.6pt] (b) -- (e);
    \draw[draw=black!25, line width=0.6pt] (c) -- (f);
    \draw[draw=black!25, line width=0.6pt] (c) -- (g);
    \draw[draw=black!25, line width=0.6pt] (d) -- (h);

    \draw[-Latex, draw=green!45!black, line width=1.7pt] (r) -- (a);
    \draw[-Latex, draw=green!45!black, line width=1.7pt] (a) -- (d);
    \draw[-Latex, dashed, draw=orange!80!black, line width=1.2pt] (d) -- (bonus);
\end{tikzpicture}

\vspace{0.25em}
\parbox{0.94\linewidth}{\centering \textbf{(b)} Verifier walk: two matches, then the next bonus token}
\end{minipage}

\caption{Illustration of one DDTree decoding round. The bonus token $a$ from the previous round is carried into the current round and serves as the tree root. One block diffusion drafter pass produces per-position marginals for the next three draft positions, from which DDTree selects a draft tree. During verification, the target model follows its own decoding rule, and the walk continues whenever the selected token appears as a child in the tree. In this example, two speculative nodes are matched (green), the matched path is added to the output, and the first unmatched target token $a'$ becomes the bonus token for the next round.}
\label{fig:round-overview}
\end{figure}
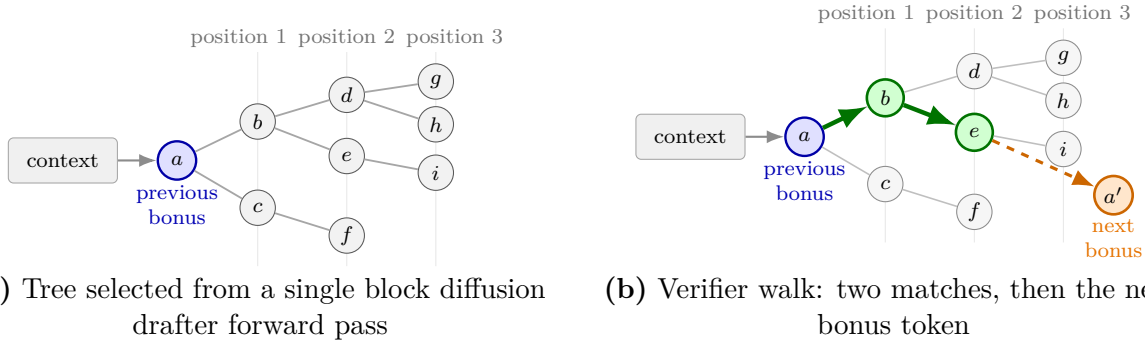

\begin{enumerate}[leftmargin=1.5em]
    \item Run the block diffusion drafter once to obtain per-position distributions for the next $L$ draft positions after $b$.
    \item Build a draft tree with $B$ nodes from those distributions. The node budget $B$ controls how many candidate tokens the verifier (target model) evaluates.
    \item Compile the tree into input tensors for the target model and run one target-model forward pass with tree attention.
    \item Walk the tree as illustrated in Figure~\ref{fig:round-overview}: at each step, use the target model's decoding rule to choose the next token and check whether it matches a child in the tree; accept the matched drafted path, and carry the first unmatched target token (the next bonus token) to the next round.
\end{enumerate}

\subsection{Surrogate objective for draft-tree selection}
\label{sec:surrogate-objective}

Consider a decoding round with current context \(c\) and bonus token \(b\). A single block diffusion drafter forward pass produces \(L\) per-position distributions \(q_i(\cdot)\), \(i=1,\dots,L\), for the next \(L\) positions after \(b\). For \(1\le d\le L\), let
\begin{equation}
    u=(u_1,\dots,u_d)\in\vocab^d
\end{equation}
denote a candidate continuation prefix after \(b\). We identify each tree node with such a prefix, so a node at depth \(d\) represents the path \(u_1,\dots,u_d\) from the root \(b\).\footnote{Throughout the paper, we use ``node'' to refer either to the full prefix \(u\) or, when the context is unambiguous, to its final token \(u_d\).}

A draft tree \(T\) is a set of such prefixes. The tree is valid if it is prefix-closed: whenever \((u_1,\dots,u_d)\in T\) with \(d\ge 2\), the parent prefix \((u_1,\dots,u_{d-1})\) also lies in \(T\). In other words, if a node is included in the tree, then so are all of its ancestors back to the root \(b\). The root \(b\) itself does not count toward the node budget.
Let
\[
N = \sum_{d=1}^{L} |\vocab|^d
\]
be the total number of nonempty prefixes of length at most \(L\). In the regime relevant for DDTree, this number is enormous, and we always choose node budgets with \(B \ll N\). Accordingly, throughout the rest of this section we treat \(B < N\) as a standing assumption.

For a candidate continuation \(y_{1:L}\), we define its acceptance length under \(T\) as the longest prefix of \(y_{1:L}\) that appears in the tree:
\begin{equation}
    \alpha_T(y_{1:L}) = \max\{d : y_{1:d} \in T\},
\end{equation}
with \(\alpha_T(y_{1:L}) = 0\) when no depth-1 node matches. With this in place, the ideal tree-construction objective is
\begin{equation}
    \max_{T:\,|T|\le B,\;T\text{ valid}} \E_{Y_{1:L}\sim p(\cdot \mid c,b)}[\alpha_T(Y_{1:L})],
\end{equation}
namely, the expected acceptance length under the target model \(p\).
This objective, however, is infeasible to optimize as it depends on the target-conditioned continuation probabilities~\eqref{eq:target-factorization}, which are unavailable when the tree is constructed.

We therefore use a surrogate based on the factorized distribution induced by the block diffusion drafter $Q(\cdot \mid c,b)$, constructed solely from the per-position marginals \(\{q_i\}_{i=1}^L\) in~\eqref{eq:factorized-approx}. This surrogate does not require full path-conditioned probabilities, which are unavailable from a single block diffusion forward pass.

Accordingly, we choose the draft tree by maximizing the expected acceptance length under the factorized distribution:
\begin{equation}
\label{eq:surrogate-objective}
    \max_{T:\,|T|\le B,\;T\text{ valid}} \E_{Y_{1:L}\sim Q(\cdot \mid c,b)}[\alpha_T(Y_{1:L})].
\end{equation}
To analyze this surrogate objective, we express it in terms of prefix probabilities under \(Q\). For a prefix \(u = (u_1, \dots, u_d)\), the probability that a sampled continuation under \(Q\) begins with \(u\) is
\begin{equation}
    q(u \mid c,b) = \prod_{i=1}^{|u|} q_i(u_i \mid c,b).
    \label{eq:prefix-mass}
\end{equation}
The above is the key quantity for tree construction, because the acceptance length is determined by how far the sampled continuation matches a prefix in the tree. The next proposition shows that the surrogate objective decomposes as an additive sum of prefix masses over the nodes in \(T\).

The above is the key quantity for tree construction, because the acceptance length is determined by the longest prefix of the sampled continuation that matches a path in the tree. The next proposition shows that the surrogate objective decomposes as an additive sum of prefix masses over the nodes in \(T\).

\begin{proposition}
\label{prop:surrogate-additive}
For any valid draft tree \(T\),
\begin{equation}
    \E_{Y_{1:L} \sim Q(\cdot \mid c,b)}[\alpha_T(Y_{1:L})] = \sum_{u \in T} q(u \mid c,b).
\end{equation}
\end{proposition}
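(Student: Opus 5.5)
The plan is to write the acceptance length as a sum of indicators over the tree nodes, one for each depth, and then take expectations term by term. Fix a valid tree \(T\) and a continuation \(y_{1:L}\). The key pointwise identity is
\begin{equation*}
    \alpha_T(y_{1:L}) = \sum_{d=1}^{L} \mathbf{1}\{y_{1:d}\in T\} = \sum_{u\in T} \mathbf{1}\{y_{1:|u|} = u\}.
\end{equation*}
Once this holds, linearity of expectation gives \(\E_Q[\alpha_T(Y_{1:L})] = \sum_{u\in T} \Pr_Q(Y_{1:|u|}=u)\). Since \(Q\) is a product of the per-position marginals in~\eqref{eq:factorized-approx}, the event \(Y_{1:|u|}=u\) involves only the first \(|u|\) coordinates. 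Summing out the remaining coordinates, each \(q_i\) totals one, so the probability equals \(\prod_{i=1}^{|u|} q_i(u_i\mid c,b) = q(u\mid c,b)\), exactly the prefix mass in~\eqref{eq:prefix-mass}.

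The only step requiring care is the first equality, and this is where prefix-closedness is used. Let \(k=\alpha_T(y_{1:L})\) be the largest \(d\) with \(y_{1:d}\in T\), or \(0\) if there is none.

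\emph{Depths up to \(k\).} For every \(d\le k\), the prefix \(y_{1:d}\) is an ancestor of \(y_{1:k}\in T\). By validity of \(T\), applied inductively down from depth \(k\), each such \(y_{1:d}\) lies in \(T\).

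\emph{Depths beyond \(k\).} For \(d>k\), we have \(y_{1:d}\notin T\) by maximality of \(k\).

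Hence the indicator sum counts exactly \(k\) ones, which proves the first equality. Without prefix-closedness this would fail: the set \(\{d: y_{1:d}\in T\}\) could have gaps, and the maximum would exceed the count.

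The second equality in the display is a reindexing. For a fixed \(y\), the nodes \(u\in T\) with \(y_{1:|u|}=u\) are in bijection with the depths \(d\) such that \(y_{1:d}\in T\), since at each depth there is exactly one candidate prefix, namely \(y_{1:d}\). Tree nodes have depth at most \(L\), so all terms are well-defined.

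No convergence issues arise, because \(T\) is finite (\(|T|\le B\)) and every sum is finite. I therefore expect no real obstacle beyond stating the prefix-closure argument cleanly.
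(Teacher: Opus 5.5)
Your proof is correct and follows essentially the same route as the paper's. The paper writes $\alpha_T(Y_{1:L})=\sum_{d=1}^{L}\mathbf{1}\{\alpha_T(Y_{1:L})\ge d\}$, identifies each event $\{\alpha_T\ge d\}$ with $\{Y_{1:d}\in T\}$ (your pointwise identity), and sums the disjoint depth-$d$ node probabilities; your version additionally makes explicit where prefix-closedness is used and why $\Pr_Q(Y_{1:|u|}=u)$ reduces to the prefix mass $q(u\mid c,b)$ once the later coordinates are marginalized out.
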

\noindent All proofs are in Appendix~\ref{app:proofs}. Proposition~\ref{prop:surrogate-additive} shows that the surrogate objective is an additive sum of prefix probabilities
$\sum_{u \in T} q(u \mid c,b).$ Therefore, selecting a draft tree reduces to choosing a valid set of at most \(B\) prefixes that maximizes this sum. Since the objective is additive over nodes and all terms are nonnegative, the optimal solution is to include the highest-probability prefixes up to the budget \(B\), subject to the prefix-closure constraint.
The next proposition shows that these top-\(B\) prefixes automatically form a valid tree and are therefore optimal.

For the next proposition and the results that follow, we assume \(q_i(v \mid c,b) \in (0,1)\) for all depths \(i\) and tokens \(v\). This is the standard case when the draft model outputs probabilities via a softmax over finite logits. If needed, the construction can be modified slightly to handle edge cases without changing the main results.

\begin{proposition}
\label{prop:top-b-tree}
Let \(u^{(1)}, u^{(2)}, \dots\) be all nonempty prefixes of length at most \(L\), ordered so that
\[
q\bigl(u^{(1)} \mid c,b\bigr) \ge q\bigl(u^{(2)} \mid c,b\bigr) \ge \dots,
\]
with ties broken arbitrarily. Define
\[
T_B = \{u^{(1)}, \dots, u^{(B)}\}.
\]
Then, \(T_B\) is a valid draft tree with \(|T_B| \le B\). Moreover, \(T_B\) maximizes \(\E_{Y_{1:L} \sim Q(\cdot \mid c,b)}[\alpha_T(Y_{1:L})]\) among all valid draft trees with \(|T| \le B\).
\end{proposition}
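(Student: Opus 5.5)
The plan is to split the statement into two parts, validity of \(T_B\) and its optimality, and to rely on two facts. The first is a strict monotonicity of prefix masses along tree paths, which comes from the standing assumption \(q_i(v\mid c,b)\in(0,1)\). The second is the additive decomposition of Proposition~\ref{prop:surrogate-additive}.

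\textbf{Step 1 (size).} Because \(B<N\) is a standing assumption, the list \(u^{(1)},u^{(2)},\dots\) has at least \(B\) entries, and they are distinct prefixes. Hence \(|T_B|=B\le B\).

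\textbf{Step 2 (strict monotonicity).} Take any prefix \(u=(u_1,\dots,u_d)\) with \(d\ge 2\), and let \(\bar u=(u_1,\dots,u_{d-1})\) be its parent. By \eqref{eq:prefix-mass},
\[
q(u\mid c,b)=q(\bar u\mid c,b)\,q_d(u_d\mid c,b).
\]
Here \(q(\bar u\mid c,b)>0\) and \(q_d(u_d\mid c,b)<1\). It follows that \(q(u\mid c,b)<q(\bar u\mid c,b)\) strictly.

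\textbf{Step 3 (prefix-closure).} This is the only step where the arbitrary tie-breaking could cause trouble, and Step 2 is exactly what removes it. Suppose \(u=u^{(k)}\in T_B\) with \(k\le B\) and \(|u|\ge2\). Let its parent \(\bar u\) sit at position \(j\) in the ordering. If \(j>k\), the nonincreasing order gives \(q(\bar u\mid c,b)\le q(u^{(k)}\mid c,b)=q(u\mid c,b)\), which contradicts Step 2. So \(j<k\le B\) and \(\bar u\in T_B\). Applying this repeatedly up the path shows that every ancestor of \(u\) lies in \(T_B\), so \(T_B\) is valid.

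\textbf{Step 4 (optimality).} By Proposition~\ref{prop:surrogate-additive}, for any valid \(T\) the objective equals \(\sum_{u\in T}q(u\mid c,b)\). Every term is nonnegative and \(|T|\le B\), so this sum is at most the sum of the \(B\) largest prefix masses. By construction that maximum is \(\sum_{k=1}^{B}q(u^{(k)}\mid c,b)\).

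To justify the bound, sort the elements of \(T\) by mass as \(t_1,t_2,\dots\) in nonincreasing order. The \(m\)-th largest element of any set of prefixes has mass at most \(q(u^{(m)}\mid c,b)\), since at least \(m\) prefixes have mass \(\ge\) it. So the bound holds termwise, and the missing terms (when \(|T|<B\)) contribute nonnegative amounts on the right. Because \(T_B\) is valid by Step 3 and attains this upper bound, it maximizes the surrogate objective among all valid trees with \(|T|\le B\).
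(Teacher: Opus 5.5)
Your proof is correct and follows the paper's route: strict decrease of prefix mass from parent to child (from $q_i\in(0,1)$) puts every ancestor ahead of its descendants in any nonincreasing order, and the additive, nonnegative decomposition from Proposition~\ref{prop:surrogate-additive} makes the top-$B$ set optimal. Your termwise bound $q(t_m)\le q(u^{(m)})$ and the explicit size check only spell out steps the paper asserts directly.
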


The above result is analogous to OPT-Tree's expected acceptance length objective \citep{wang2025opt}, with the crucial difference that in our case all required probabilities are obtained from a single block diffusion drafter forward pass rather than from multiple autoregressive passes.

\begin{remark}
The result stated in Proposition~\ref{prop:top-b-tree} is exact for the surrogate objective induced by the factorized draft distribution \(Q(\cdot \mid c,b)\), not for the true target-model distribution. The task here is not to recover the unavailable target model path-conditioned probabilities, but to make the best use of the information available from one block diffusion drafter forward pass.
\end{remark}

\subsection{Efficient and optimal tree construction}
\label{sec:best-first-construction}

Proposition~\ref{prop:top-b-tree} reveals that an optimal draft tree is obtained by taking the $B$ highest-probability prefixes. The remaining challenge is therefore algorithmic: how can we recover these prefixes efficiently, without enumerating the exponentially many possible prefixes up to depth \(L\)? In what follows, we show that this can be done with a simple best-first search procedure.

At each depth \(i\), let \(v_i^{(1)}, v_i^{(2)}, \dots\) be the tokens ordered so that
\[
q_i(v_i^{(1)} \mid c,b) \ge q_i(v_i^{(2)} \mid c,b) \ge \dots,
\]
and let \(q_i^{(k)} = q_i(v_i^{(k)} \mid c,b)\). We index prefixes by token ranks rather than by vocabulary ids. Specifically, a rank tuple
\[
\rho=(\rho_1,\dots,\rho_d),
\qquad
1 \le d \le L,
\qquad
1 \le \rho_i \le |\vocab|,
\]
denotes the depth-\(d\) prefix $(v_1^{(\rho_1)},\dots,v_d^{(\rho_d)})$.
For example, \(\rho=(1,3,2)\) denotes the prefix obtained by taking the most probable token at position 1, the third-most probable token at position 2, and the second-most probable token at position 3.
The probability of a prefix \(\rho\) is
\[
q(\rho)=\prod_{i=1}^d q_i^{(\rho_i)},
\]
and its log-probability is
\[
\sigma(\rho)=\log q(\rho)=\sum_{i=1}^d \log q_i^{(\rho_i)}.
\]
We will use \(\sigma(\rho)\) to rank prefixes in a max-heap, which we use as a priority queue that returns the prefix with the largest value first. Taking logarithms converts products into sums, improving numerical stability while preserving the ordering of prefixes.

Let \(K = \min(B,|\vocab|)\), and let
\[
\mathcal{S}_K
=
\{(\rho_1,\dots,\rho_d): 1 \le d \le L,\; 1 \le \rho_i \le K \text{ for } i=1,\dots,d\}.
\]
Thus, \(\mathcal{S}_K\) contains exactly the prefixes that use only the top-\(K\) tokens at every depth.

\begin{lemma}
\label{lem:top-k-reduction}
There exists an optimal valid draft tree that maximizes the surrogate expected acceptance length in \eqref{eq:surrogate-objective}, such that every node in the tree lies in \(\mathcal{S}_K\). Equivalently, there exists an optimal valid draft tree in which every node uses only the top-\(K\) tokens at each depth.
\end{lemma}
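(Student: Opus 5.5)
Start from an optimal valid tree $T^\star$ and modify it by an exchange argument until every node lies in $\mathcal{S}_K$, without lowering the objective. Proposition~\ref{prop:top-b-tree} already supplies the optimal tree $T_B$, so the cleanest route is to show directly that $T_B$ itself can be chosen inside $\mathcal{S}_K$. By Proposition~\ref{prop:surrogate-additive}, $T_B$ maximizes \eqref{eq:surrogate-objective}. We may fix the tie-breaking so that, among prefixes of equal probability, those in $\mathcal{S}_K$ come first; Proposition~\ref{prop:top-b-tree} allows arbitrary tie-breaking, so the resulting $T_B$ is still a valid optimal tree.

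If $K=|\vocab|$, then $\mathcal{S}_K$ contains every prefix and there is nothing to prove. So assume $K=B<|\vocab|$. Suppose, for contradiction, that some $u\in T_B$ lies outside $\mathcal{S}_K$. Then at some depth $j$ its rank satisfies $\rho_j>K=B$. For each $k=1,\dots,B$, replace the $j$-th rank of $u$ by $k$, giving prefixes $w^{(1)},\dots,w^{(B)}$. These are $B$ distinct prefixes. Each satisfies $q(w^{(k)})=q(u)\,q_j^{(k)}/q_j^{(\rho_j)}\ge q(u)$, since $q_j^{(k)}\ge q_j^{(\rho_j)}$ for $k<\rho_j$. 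All are distinct from $u$. The positive-probability assumption $q_i\in(0,1)$ keeps these ratios well defined.

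Since $u$ is among the top $B$, at most $B-1$ prefixes precede it in the ordering. Any $w^{(k)}$ with $q(w^{(k)})>q(u)$ strictly precedes $u$.
\begin{itemize}
\item If all inequalities are strict, we get $B$ prefixes ranked above $u$, a contradiction.
\item If some are ties, the tie-breaking rule still places the tied $w^{(k)}$ before $u$, but only when $w^{(k)}\in\mathcal{S}_K$. This need not hold, because $w^{(k)}$ may have other coordinates with rank exceeding $K$.
\end{itemize}

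Handling ties is the main obstacle. To fix it, I would replace the tie-breaking rule by a secondary key: order by probability, then by the number of coordinates with rank exceeding $K$, then arbitrarily. The substitution $u\mapsto w^{(k)}$ fixes coordinate $j$ and leaves the others unchanged, so it strictly decreases this count. Every $w^{(k)}$ therefore precedes $u$, giving $B$ prefixes above it, a contradiction. Since this is still a valid tie-breaking, Proposition~\ref{prop:top-b-tree} applies, and the resulting $T_B$ is optimal with $T_B\subseteq\mathcal{S}_K$.

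As a fallback, one can avoid tie-breaking altogether with a direct exchange on an optimal tree. Choose an optimal tree minimizing the total count of out-of-range coordinates. Take an offending node of maximal depth, which is therefore a leaf. Among the $B$ siblings-by-rank $w^{(k)}$, at least one is not in $T$; otherwise $|T|\ge B+1$. Some of these may fail to have their parent in $T$, so I would choose the swap carefully, substituting on the shallowest offending coordinate of a suitable leaf. The leaf-swap bookkeeping is exactly where care is needed, which is why I prefer the tie-breaking argument.
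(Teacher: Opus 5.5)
Your main argument is correct and is essentially the paper's proof: you fix a tie-breaking rule so that Proposition~\ref{prop:top-b-tree} yields an optimal $T_B$, then derive a contradiction from an offending $u \in T_B$ by exhibiting $B$ distinct prefixes of probability at least $q(u \mid c,b)$ that must precede it. The only difference is how you handle several coordinates of rank above $B$: the paper keeps the simple rule ($\mathcal{S}_K$ first on ties) and also resets every other offending coordinate to rank $1$, so all its witnesses lie in $\mathcal{S}_K$, whereas you keep single-coordinate witnesses and refine the tie-break by the number of offending coordinates---both are valid, and your fallback exchange argument is not needed.
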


Lemma~\ref{lem:top-k-reduction} reduces the search space to the top-\(K\) tokens at each depth. Algorithm~\ref{alg:tree-build} enumerates the retained prefixes \(\mathcal{S}_K\) in descending order of \(\sigma(\rho)\) using a max-heap. It starts from the tuple \((1)\), i.e., the length-1 tuple containing only rank \(1\). When a tuple \(\rho=(\rho_1,\dots,\rho_d)\) is popped, the algorithm generates at most two new tuples: its next sibling, which changes only the last rank to \(\rho_d+1\), and its first child, which appends rank \(1\) at the next depth. Intuitively, the sibling explores an alternative token at the current position, while the child extends the current prefix with the best available token at the next position. After \(B\) pops, the algorithm returns the top-\(B\) prefixes.
The next proposition establishes the optimality of Algorithm~\ref{alg:tree-build}.

\begin{algorithm}[H]
\caption{Best-first draft-tree construction from one block diffusion drafter pass}
\label{alg:tree-build}
\begin{algorithmic}[1]
\Require Top-\(K\) tokens \(\{v_i^{(k)}\}_{i=1,k=1}^{L,K}\) and their probabilities \(\{q_i^{(k)}\}_{i=1,k=1}^{L,K}\); node budget \(B\)
\State Initialize max-heap \(H \gets \{((1), \sigma((1)))\}\)
\State Initialize draft tree \(T \gets \varnothing\)
\While{\(|T| < B\) and \(H \neq \varnothing\)}
    \State Pop the rank tuple \(\rho=(\rho_1,\dots,\rho_d)\) with largest score \(\sigma(\rho)\)
    \State Add prefix \((v_1^{(\rho_1)},\dots,v_d^{(\rho_d)})\) to \(T\)
    \If{\(\rho_d + 1 \le K\)}
        \State Push next sibling \((\rho_1,\dots,\rho_{d-1},\rho_d+1)\) with score \(\sigma(\rho) - \log q_d^{(\rho_d)} + \log q_d^{(\rho_d+1)}\)
    \EndIf
    \If{\(d < L\)}
        \State Push first child \((\rho_1,\dots,\rho_d,1)\) with score \(\sigma(\rho) + \log q_{d+1}^{(1)}\)
    \EndIf
\EndWhile
\State \Return draft tree \(T\)
\end{algorithmic}
\end{algorithm}

\begin{proposition}
\label{prop:alg_optimal_tree}
Algorithm~\ref{alg:tree-build} returns an optimal valid draft tree for the surrogate objective in \eqref{eq:surrogate-objective} under node budget \(B\).
\end{proposition}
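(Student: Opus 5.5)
The plan is to show that Algorithm~\ref{alg:tree-build} pops exactly the \(B\) highest-probability prefixes of \(\mathcal{S}_K\), in non-increasing order of \(\sigma\). We then combine this with Lemma~\ref{lem:top-k-reduction} and Propositions~\ref{prop:surrogate-additive} and~\ref{prop:top-b-tree} to conclude optimality.

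\textbf{Generation structure.} The sibling/child moves define a rooted tree \(G\) on \(\mathcal{S}_K\) with root \((1)\). Every \(\rho=(\rho_1,\dots,\rho_d)\neq(1)\) has a unique generator:
\begin{itemize}[leftmargin=1.5em]
\item if \(\rho_d>1\), the generator is \((\rho_1,\dots,\rho_d-1)\), and \(\rho\) is its next sibling;
\item if \(\rho_d=1\) and \(d\ge2\), the generator is \((\rho_1,\dots,\rho_{d-1})\), and \(\rho\) is its first child.
\end{itemize}
Following generators strictly decreases \(d+\sum_i\rho_i\), so every element of \(\mathcal{S}_K\) is reachable from \((1)\). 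Each tuple is pushed only when its unique generator is popped, so no tuple is pushed twice. The key monotonicity is that \(\sigma\) is non-increasing along generation edges. For a sibling move, \(q_d^{(\rho_d+1)}\le q_d^{(\rho_d)}\) by the rank ordering. For a child move, we multiply by \(q_{d+1}^{(1)}\le 1\).

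\textbf{Heap invariant.} I will prove by induction that the sequence of popped scores is non-increasing. I will also prove that at every time, each unpopped \(\rho\in\mathcal{S}_K\) has an ancestor in \(G\) (possibly itself) currently in the heap. The second claim holds because the first unpopped tuple on the \(G\)-path from the root to \(\rho\) was pushed when its already-popped generator was popped.

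Now let \(\rho^\ast\) be the tuple popped at some step, and suppose an unpopped \(\rho\) had \(\sigma(\rho)>\sigma(\rho^\ast)\). By the invariant, \(\rho\) has an ancestor \(\pi\) in the heap, and by monotonicity \(\sigma(\pi)\ge\sigma(\rho)>\sigma(\rho^\ast)\). This contradicts the max-heap choice of \(\rho^\ast\). Hence the first \(B\) pops are a top-\(B\) set of \(\mathcal{S}_K\) under \(\sigma\), with ties broken in some way.

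We also need the loop to actually make \(B\) pops. The set \(\mathcal{S}_K\) has at least \(K\ge\min(B,|\vocab|)\) elements at depth one, and many more at deeper levels. Under the standing assumption \(B<N\), and using \(L\ge1\), one checks \(|\mathcal{S}_K|\ge B\); when \(K=B\) this is immediate from the depth-one elements alone. So the heap never empties before \(B\) pops.

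\textbf{From top-\(B\) of \(\mathcal{S}_K\) to global optimality.} This is the main obstacle, since Proposition~\ref{prop:top-b-tree} speaks of the top-\(B\) among \emph{all} prefixes. The plan is to show that a top-\(B\) set of \(\mathcal{S}_K\) is also a top-\(B\) set globally, up to ties.

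Take any prefix \(\rho\notin\mathcal{S}_K\), so some coordinate satisfies \(\rho_i>K\); then \(K=B<|\vocab|\). Replacing \(\rho_i\) by each of \(1,\dots,K\) gives \(B\) distinct elements of \(\mathcal{S}_K\), each with probability at least \(q(\rho)\). Hence the \(B\)-th largest value in \(\mathcal{S}_K\) is at least \(q(\rho)\). Therefore an ordering of all prefixes exists, with ties broken suitably, whose first \(B\) entries are exactly the popped set.

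Proposition~\ref{prop:top-b-tree} holds for arbitrary tie-breaking, so the popped set \(T\) is a valid tree maximizing the surrogate. Equivalently, by Proposition~\ref{prop:surrogate-additive}, \(\sum_{u\in T}q(u)\) equals the maximum. This is consistent with Lemma~\ref{lem:top-k-reduction}. Finally, validity also follows directly: a generator's \(G\)-parent-chain pops before it, so the parent prefix of each popped node is popped earlier.
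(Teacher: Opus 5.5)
Your proof follows the paper's argument essentially step for step. Your generator is the paper's \(\operatorname{pred}\). Your heap invariant (the first unpopped tuple on the generator chain is in the heap, and \(\sigma\) is non-increasing along generator edges) is exactly the paper's. You finish the same way, passing from a top-\(B\) set of \(\mathcal{S}_K\) to global optimality via the top-\(K\) domination argument and Proposition~\ref{prop:top-b-tree}. Your explicit check that the heap cannot empty before \(B\) pops covers a point the paper leaves implicit.

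One step is false as written. Suppose \(\rho\notin\mathcal{S}_K\) has two or more coordinates exceeding \(K\). Then replacing only \(\rho_i\) by \(1,\dots,K\) produces tuples that still lie outside \(\mathcal{S}_K\); for example, \(\rho=(3,3)\) with \(K=2\) gives \((1,3)\) and \((2,3)\). So you do not yet have \(B\) elements of \(\mathcal{S}_K\) dominating \(\rho\).

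The repair is the one in the paper's proof of Lemma~\ref{lem:top-k-reduction}. First reset \emph{every} coordinate whose rank exceeds \(K\) to rank \(1\), and only then vary the \(i\)-th coordinate over ranks \(1,\dots,K\). The resulting \(B\) tuples are distinct and lie in \(\mathcal{S}_K\). Each has probability at least \(q(\rho)\), because every changed coordinate moved to a smaller rank.

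Alternatively, cite Lemma~\ref{lem:top-k-reduction} directly, which is how the paper concludes. It gives an optimum inside \(\mathcal{S}_K\). Your pops form a top-\(B\) set of \(\mathcal{S}_K\), which you show directly is valid. Together these already give optimality.
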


\begin{remark}
    The heap stage costs \(O(B \log B)\): the algorithm performs at most \(B\) pops and at most \(2B\) pushes, and the heap size is \(O(B)\) throughout.
\end{remark}

\subsection{Efficient verification and cache update}
\label{sec:verify}

Armed with an algorithm that efficiently constructs the optimal draft tree under the node budget, we now describe how the selected draft tree is compiled for verification, how the verifier walk is performed, and how the KV cache is updated afterward.

To verify the selected draft tree in one target-model forward pass, we flatten it into a sequence of token ids rooted at the bonus token $b$. We assign position ids by tree depth so that the verifier applies the correct positional encoding. We then use tree attention \citep{miao2023specinfer}, under which each drafted node attends to the past context through the KV cache and, within the drafted tree, only to the root, its ancestors, and itself. Together, these inputs let the verifier score all selected tree-branches in a single forward pass.

Verification then follows the target model's own decoding rule, whether greedy or temperature-based sampling. Figure~\ref{fig:round-overview}(b) illustrates the process. Starting from the bonus token $b$, we check whether the token selected by the target model at the current node matches one of that node's children in the draft tree. If it does, that child is accepted and the walk continues. Since the verifier has already produced outputs for all drafted nodes in the same forward pass, this continuation requires no additional target-model call. If no child matches, the walk stops. The accepted path is appended to the output sequence, the first unmatched target-model token becomes the bonus token for the next round, and the KV cache is compacted to retain only the accepted path.

\section{Experiments}

We evaluate whether DDTree improves end-to-end speculative decoding over vanilla DFlash. Our experiments focus on decoding speed and acceptance behavior across target model size, domain, and decoding temperature.

\subsection{Experimental setup}
We evaluate three target models, Qwen3-4B, Qwen3-8B, and Qwen3-Coder-30B-A3B-Instruct~\citep{yang2025qwen3}, each paired with its corresponding DFlash checkpoint available at \url{https://huggingface.co/collections/z-lab/dflash}.
Our benchmark suite spans reasoning tasks such as MATH-500~\citep{lightman2023let}, GSM8K~\citep{cobbe2021training}, AIME 2024, and AIME 2025; code tasks such as HumanEval~\citep{chen2021evaluating}, MBPP~\citep{austin2021program}, LiveCodeBench~\citep{jain2025livecodebench}, and SWE-bench Lite~\citep{jimenez2024swebench}; and general instruction and dialogue tasks such as MT-Bench~\citep{zheng2023judging} and Alpaca~\citep{taori2023alpaca}. We run the benchmark on 8 H200 GPUs at temperatures 0.0 and 1.0 and report speedup relative to autoregressive decoding, mean acceptance length \(\tau\) including the bonus token, and, for the case study below, the acceptance length histogram. Additional benchmark details, including per-dataset sample counts, decoding hyperparameters, warmup, and backend selection, are provided in Appendix~\ref{app:benchmark-details}.

\subsection{Main results}

\begin{table*}[t]
\centering
\small
\caption{Speedup over autoregressive decoding and mean acceptance length (\(\tau\)). Results are reported separately for temperature 0.0 and temperature 1.0. For each dataset, model, and temperature, the DDTree entry uses the best node budget from \(\{16, 32, 64, 128, 256, 512, 1024\}\).}
\label{tab:benchmark-results}
\resizebox{\textwidth}{!}{
\begin{tabular}{l rcrc rcrc rcrc}
\toprule
 & \multicolumn{4}{c}{\textbf{Qwen3-4B}} & \multicolumn{4}{c}{\textbf{Qwen3-8B}} & \multicolumn{4}{c}{\textbf{Qwen3-Coder-30B-A3B-Instruct}} \\
\cmidrule(lr){2-5} \cmidrule(lr){6-9} \cmidrule(lr){10-13}
\textbf{Dataset} & \multicolumn{2}{c}{DFlash} & \multicolumn{2}{c}{DFlash+DDTree} & \multicolumn{2}{c}{DFlash} & \multicolumn{2}{c}{DFlash+DDTree} & \multicolumn{2}{c}{DFlash} & \multicolumn{2}{c}{DFlash+DDTree} \\
\cmidrule(lr){2-3} \cmidrule(lr){4-5} \cmidrule(lr){6-7} \cmidrule(lr){8-9} \cmidrule(lr){10-11} \cmidrule(lr){12-13}
 & Speedup & $\tau$ & Speedup & $\tau$ & Speedup & $\tau$ & Speedup & $\tau$ & Speedup & $\tau$ & Speedup & $\tau$ \\
\midrule
\multicolumn{13}{l}{\textit{Temperature = 0.0}} \\
AIME 2024 & 5.56$\times$ & 7.54 & \textbf{7.27$\times$} & \textbf{10.37} & 5.38$\times$ & 7.46 & \textbf{7.35$\times$} & \textbf{10.42} & 3.95$\times$ & 5.16 & \textbf{5.93$\times$} & \textbf{7.87} \\
AIME 2025 & 5.33$\times$ & 7.37 & \textbf{7.23$\times$} & \textbf{10.23} & 5.32$\times$ & 7.39 & \textbf{6.99$\times$} & \textbf{9.86} & 3.98$\times$ & 5.13 & \textbf{5.88$\times$} & \textbf{7.63} \\
Alpaca & 2.03$\times$ & 3.11 & \textbf{3.32$\times$} & \textbf{5.35} & 2.07$\times$ & 3.12 & \textbf{3.36$\times$} & \textbf{5.09} & 1.53$\times$ & 2.22 & \textbf{2.46$\times$} & \textbf{3.55} \\
GSM8K & 4.77$\times$ & 6.51 & \textbf{6.58$\times$} & \textbf{9.33} & 4.78$\times$ & 6.57 & \textbf{6.75$\times$} & \textbf{9.54} & 4.00$\times$ & 5.18 & \textbf{5.83$\times$} & \textbf{7.55} \\
HumanEval & 4.81$\times$ & 6.62 & \textbf{6.81$\times$} & \textbf{9.44} & 4.84$\times$ & 6.61 & \textbf{6.90$\times$} & \textbf{9.67} & 6.09$\times$ & 8.02 & \textbf{8.22$\times$} & \textbf{10.72} \\
LiveCodeBench & 4.97$\times$ & 7.02 & \textbf{6.78$\times$} & \textbf{9.79} & 5.02$\times$ & 7.22 & \textbf{7.10$\times$} & \textbf{10.28} & 4.72$\times$ & 6.32 & \textbf{6.54$\times$} & \textbf{8.63} \\
MATH-500 & 5.54$\times$ & 7.72 & \textbf{7.50$\times$} & \textbf{10.71} & 5.56$\times$ & 7.79 & \textbf{7.52$\times$} & \textbf{10.73} & 4.29$\times$ & 5.58 & \textbf{6.21$\times$} & \textbf{8.10} \\
MBPP & 4.38$\times$ & 6.10 & \textbf{6.42$\times$} & \textbf{9.16} & 4.33$\times$ & 5.99 & \textbf{6.39$\times$} & \textbf{9.07} & 5.61$\times$ & 7.19 & \textbf{7.68$\times$} & \textbf{9.94} \\
MT-Bench & 2.64$\times$ & 4.38 & \textbf{4.18$\times$} & \textbf{6.70} & 2.56$\times$ & 4.28 & \textbf{4.10$\times$} & \textbf{6.58} & 2.04$\times$ & 3.52 & \textbf{3.27$\times$} & \textbf{5.36} \\
SWE-bench Lite & 2.70$\times$ & 3.66 & \textbf{4.25$\times$} & \textbf{5.99} & 2.65$\times$ & 3.60 & \textbf{4.23$\times$} & \textbf{5.91} & 2.77$\times$ & 3.61 & \textbf{4.38$\times$} & \textbf{5.71} \\
\midrule
\multicolumn{13}{l}{\textit{Temperature = 1.0}} \\
AIME 2024 & 3.50$\times$ & 5.01 & \textbf{5.31$\times$} & \textbf{7.82} & 3.46$\times$ & 4.98 & \textbf{5.36$\times$} & \textbf{7.79} & 3.12$\times$ & 4.23 & \textbf{4.94$\times$} & \textbf{7.32} \\
AIME 2025 & 3.38$\times$ & 4.79 & \textbf{5.08$\times$} & \textbf{7.22} & 3.36$\times$ & 4.79 & \textbf{5.25$\times$} & \textbf{7.71} & 3.17$\times$ & 4.26 & \textbf{4.98$\times$} & \textbf{6.62} \\
Alpaca & 1.97$\times$ & 2.96 & \textbf{3.23$\times$} & \textbf{4.97} & 1.95$\times$ & 2.94 & \textbf{3.20$\times$} & \textbf{4.89} & 1.51$\times$ & 2.14 & \textbf{2.43$\times$} & \textbf{3.51} \\
GSM8K & 4.35$\times$ & 5.99 & \textbf{6.17$\times$} & \textbf{8.87} & 4.33$\times$ & 5.93 & \textbf{6.27$\times$} & \textbf{8.95} & 3.94$\times$ & 5.08 & \textbf{5.66$\times$} & \textbf{7.38} \\
HumanEval & 4.34$\times$ & 5.97 & \textbf{6.43$\times$} & \textbf{9.18} & 4.00$\times$ & 5.48 & \textbf{6.09$\times$} & \textbf{8.59} & 5.64$\times$ & 7.60 & \textbf{7.88$\times$} & \textbf{10.42} \\
LiveCodeBench & 4.53$\times$ & 6.53 & \textbf{6.44$\times$} & \textbf{9.44} & 4.52$\times$ & 6.61 & \textbf{6.46$\times$} & \textbf{9.53} & 3.77$\times$ & 5.57 & \textbf{5.47$\times$} & \textbf{7.79} \\
MATH-500 & 4.65$\times$ & 6.60 & \textbf{6.60$\times$} & \textbf{9.61} & 4.56$\times$ & 6.46 & \textbf{6.59$\times$} & \textbf{9.54} & 4.01$\times$ & 5.32 & \textbf{5.91$\times$} & \textbf{7.76} \\
MBPP & 4.03$\times$ & 5.56 & \textbf{6.09$\times$} & \textbf{8.72} & 3.83$\times$ & 5.30 & \textbf{5.87$\times$} & \textbf{8.34} & 5.47$\times$ & 7.03 & \textbf{7.55$\times$} & \textbf{9.76} \\
MT-Bench & 2.46$\times$ & 4.05 & \textbf{3.91$\times$} & \textbf{6.27} & 2.30$\times$ & 3.77 & \textbf{3.75$\times$} & \textbf{6.02} & 1.96$\times$ & 3.43 & \textbf{3.10$\times$} & \textbf{5.14} \\
SWE-bench Lite & 2.29$\times$ & 3.07 & \textbf{3.71$\times$} & \textbf{5.20} & 2.10$\times$ & 2.82 & \textbf{3.47$\times$} & \textbf{4.86} & 2.42$\times$ & 3.16 & \textbf{3.80$\times$} & \textbf{4.96} \\
\bottomrule
\end{tabular}
}
\end{table*}

Figure~\ref{fig:speedup-overview} and Table~\ref{tab:benchmark-results} summarize the main benchmark results. Figure~\ref{fig:speedup-overview} summarizes the temperature 0.0 results across datasets and target models, using the best DDTree node budget for each dataset-model pair. Table~\ref{tab:benchmark-results} reports the exact numbers at both temperatures.

DDTree improves every entry in Table~\ref{tab:benchmark-results}, covering all \(10 \times 3 \times 2 = 60\) dataset-model-temperature settings. The gains are consistent across all three target models, across reasoning, code, and general instruction tasks, and at both temperatures.

\subsection{Budget-quality tradeoff}

Figure~\ref{fig:budget-tradeoff} shows a case study on MATH-500 with Qwen3-8B at temperature 0.0. As our DDTree node budget grows, acceptance length increases steadily, and the end-to-end speedup improves until it peaks around budgets of 256 to 512. Pushing the budget to 1024 increases acceptance length further, but the tradeoff is no longer favorable as the additional overhead of verifying more drafted tokens outweighs the gain from the longer accepted prefix. Vanilla DFlash uses a single block of size 16, so the figure also shows that DDTree provides better speedup under the same conceptual budget. This highlights the importance of a front-heavy tree that does not waste budget on low-probability trajectories. Note that the optimal budget can shift across hardware platforms and implementations.

\begin{figure}[H]
\centering
\includegraphics[width=0.6\linewidth]{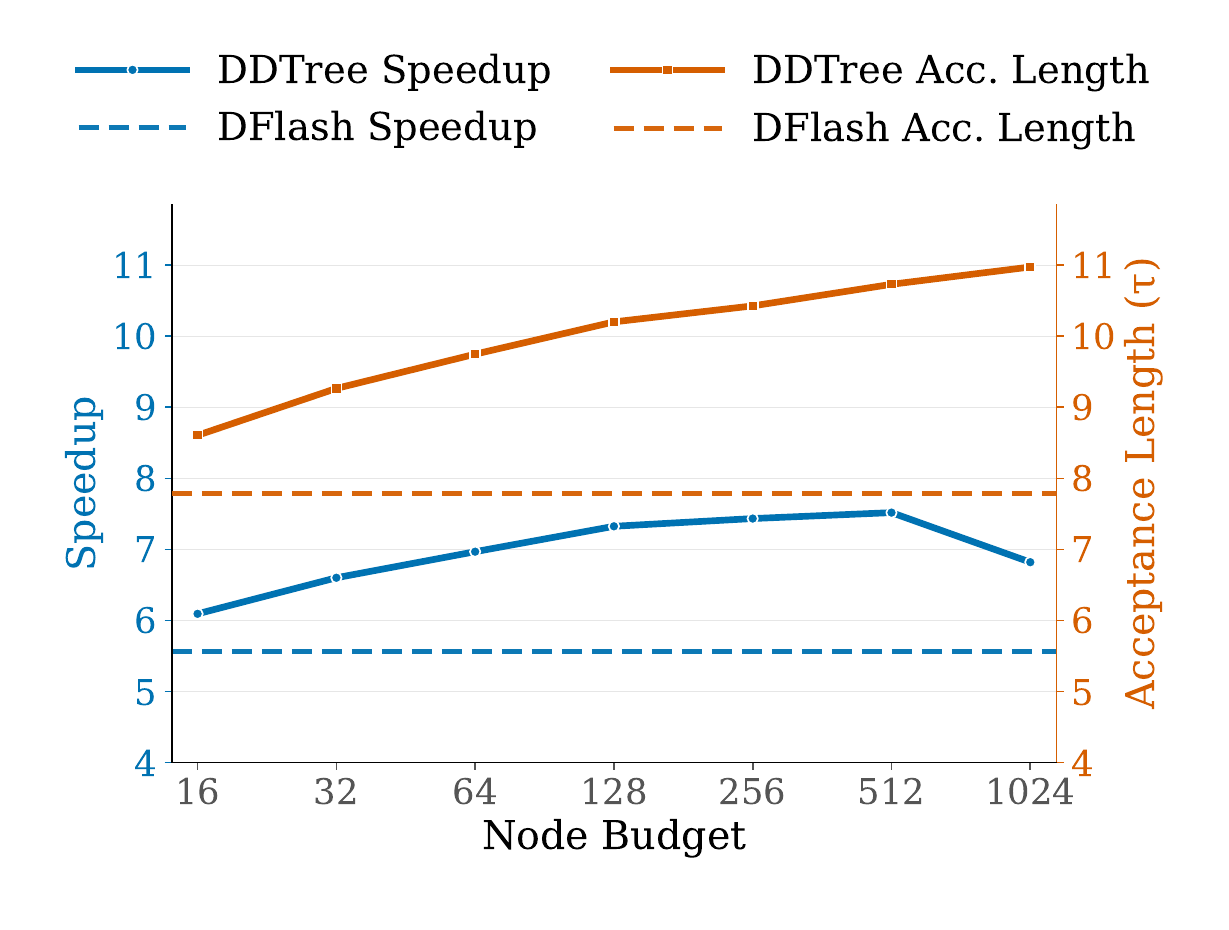}
\caption{Budget tradeoff on MATH-500 with Qwen3-8B at temperature 0.0. Acceptance length increases steadily with the DDTree node budget, while speedup peaks at an intermediate budget once verifier cost becomes dominant.}
\label{fig:budget-tradeoff}
\end{figure}

\subsection{Acceptance length distribution}

Figure~\ref{fig:acceptance-distribution} shows the histogram of acceptance lengths on MATH-500 with Qwen3-8B at temperature 0.0. The plotted DDTree distribution uses the best speedup budget, \(B=512\). Compared with vanilla DFlash, DDTree shifts substantial probability mass toward longer accepted prefixes. With DDTree, it becomes much rarer to observe acceptance lengths below 4, while full-block acceptance at length 16 becomes substantially more common. This shift explains the end-to-end speedup improvement: DDTree makes long accepted prefixes substantially more common, so the verifier needs fewer rounds per generated token.

\begin{figure}[H]
\centering
\includegraphics[width=0.6\linewidth]{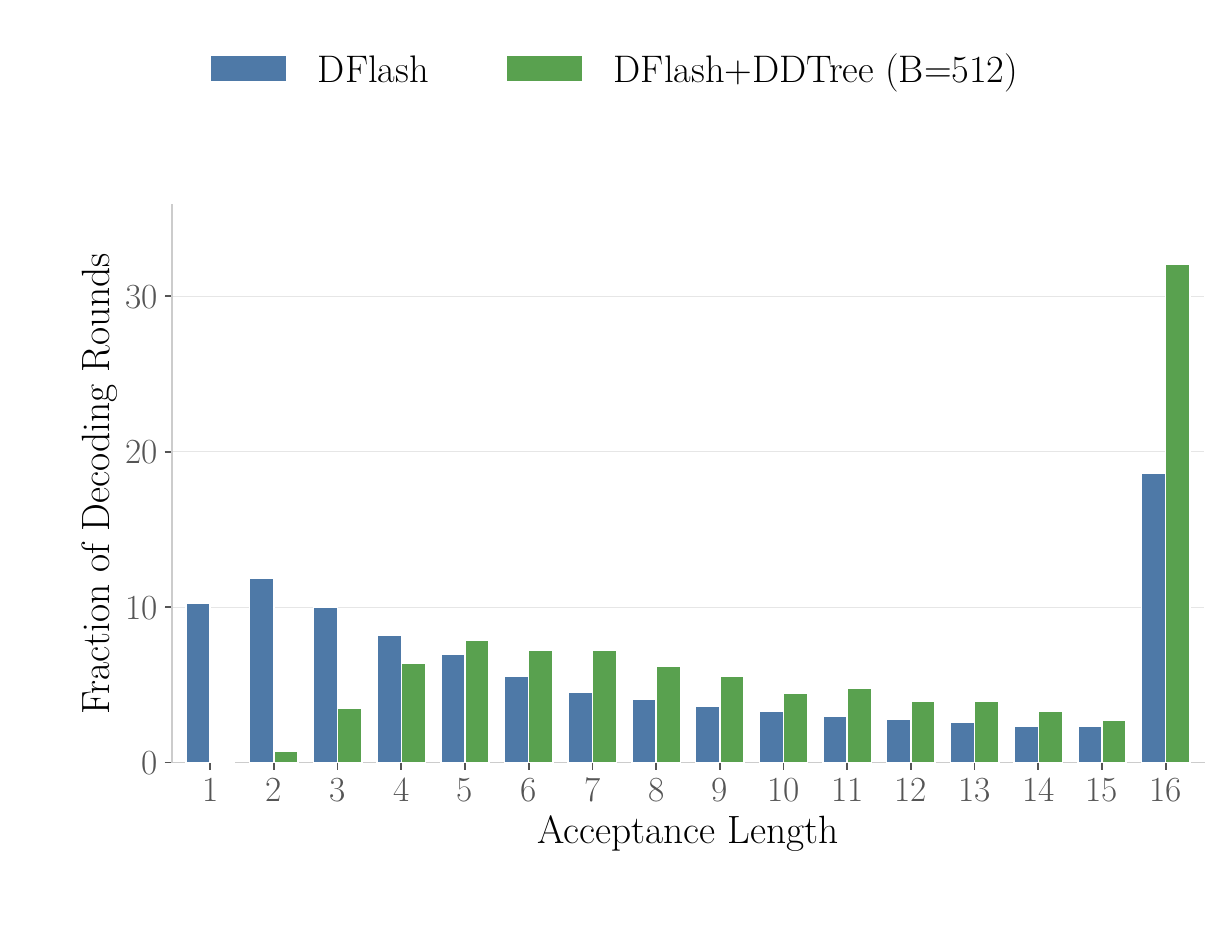}
\caption{Acceptance length distribution on MATH-500 with Qwen3-8B at temperature 0.0. The DDTree histogram uses the best speedup budget, \(B=512\), and shifts mass toward longer accepted prefixes, especially full block acceptances.}
\label{fig:acceptance-distribution}
\end{figure}

\section*{Acknowledgments}
L. R. and Y. R. were supported by the European Union (ERC, SafetyBounds, 101163414). Views and opinions
expressed are however those of the authors only and do not necessarily reflect those of the European Union or
the European Research Council Executive Agency. Neither the European Union nor the granting authority can be
held responsible for them. This research was also partially supported by the Israel Science Foundation (ISF grant
729/21). Y. R. acknowledges additional support from the Career Advancement Fellowship at the Technion. The contribution of the first author is part of a
PhD thesis research conducted at the Technion.

\bibliographystyle{unsrtnat}
\bibliography{references}

\clearpage
\appendix
\renewcommand{\thesection}{\Alph{section}}
\setcounter{section}{0}

\section{Mathematical proofs}
\label{app:proofs}

\begin{proof}[Proof of Proposition \ref{prop:surrogate-additive}]
We expand
\begin{equation}
    \alpha_T(Y_{1:L}) = \sum_{d=1}^{L} \mathbf{1}\{\alpha_T(Y_{1:L}) \ge d\}.
\end{equation}
For a fixed depth \(d\), the event \(\{\alpha_T(Y_{1:L}) \ge d\}\) holds if and only if the sampled depth-\(d\) prefix \(Y_{1:d}\) is one of the depth-\(d\) nodes in \(T\). These depth-\(d\) events are disjoint because \(Y_{1:d}\) can equal only one sequence. Therefore
\begin{equation}
    \Pr[\alpha_T(Y_{1:L}) \ge d]
    = \sum_{u \in T: |u| = d} \Pr[Y_{1:d} = u]
    = \sum_{u \in T: |u| = d} q(u \mid c,b).
\end{equation}
Summing over \(d\) gives the claim.
\end{proof}

\begin{proof}[Proof of Proposition \ref{prop:top-b-tree}]
Let \(v\) be any strict descendant of \(u\). Then
\begin{equation}
    q(v \mid c,b) = q(u \mid c,b) \prod_{i=|u|+1}^{|v|} q_i(v_i \mid c,b) < q(u \mid c,b).
\end{equation}
Therefore, every strict ancestor has a strictly larger probability than any of its descendants. It follows that, in any nonincreasing ordering of prefixes, every ancestor must appear before its descendants, which implies that \(T_B\) is prefix-closed and hence valid.

For optimality, Proposition~\ref{prop:surrogate-additive} shows that the surrogate objective for a tree \(T\) is
\[
\sum_{u \in T} q(u \mid c,b).
\]
This objective is additive over nodes, and every term is nonnegative. Therefore, among all sets of at most \(B\) prefixes, the maximum is attained by taking the top-\(B\) probabilities \(q(u \mid c,b)\), namely the prefixes in \(T_B\). Since \(T_B\) is valid, it is optimal among all valid draft trees with \(|T| \le B\). If ties occur between unrelated prefixes, other optimal trees may also exist.
\end{proof}

\begin{proof}[Proof of Lemma \ref{lem:top-k-reduction}]
Order all nonempty prefixes by nonincreasing \(q(u\mid c,b)\), breaking ties by placing prefixes in \(\mathcal{S}_K\) before prefixes outside \(\mathcal{S}_K\). By Proposition~\ref{prop:top-b-tree}, the first \(B\) prefixes in this order form an optimal valid draft tree. It therefore suffices to show that all of these first \(B\) prefixes lie in \(\mathcal{S}_K\).

If \(B\geq|\vocab|\), then \(K=\min(B,|\vocab|)=|\vocab|\), so every prefix already lies in \(\mathcal{S}_K\). Thus only the case \(B < |\vocab|\) remains, in which case \(K=B\).

Assume by contradiction that some prefix \(u=(u_1,\dots,u_d)\notin \mathcal{S}_K\) appears among the first \(B\) prefixes. Let
\[
I=\{j\in\{1,\dots,d\}:\text{the rank of }u_j\text{ at depth }j\text{ exceeds }B\},
\]
pick any \(i\in I\), and let \(\tilde u\) be obtained from \(u\) by replacing \(u_j\) with \(v_j^{(1)}\) for every \(j\in I\). For each \(k\in\{1,\dots,B\}\), let \(u^{(k)}\) be obtained from \(\tilde u\) by replacing its \(i\)-th coordinate with \(v_i^{(k)}\).

Then \(u^{(1)},\dots,u^{(B)}\) are \(B\) distinct prefixes in \(\mathcal{S}_K\), and for every \(k\),
\[
q(u^{(k)}\mid c,b)\ge q(u\mid c,b),
\]
since each replaced coordinate is changed to a token of rank at most \(B\). By the tie-breaking rule, each \(u^{(k)}\) appears before \(u\) in the ordering. Hence at least \(B\) prefixes appear before \(u\), contradicting that \(u\) is among the first \(B\).

Therefore no prefix outside \(\mathcal{S}_K\) can appear among the first \(B\) prefixes. Hence all first \(B\) prefixes lie in \(\mathcal{S}_K\), and so some optimal valid draft tree is contained in \(\mathcal{S}_K\).
\end{proof}

\begin{proof}[Proof of Proposition \ref{prop:alg_optimal_tree}]
Each \(\rho \in \mathcal{S}_K\) corresponds to the prefix \((v_1^{(\rho_1)},\dots,v_d^{(\rho_d)})\), and its score is
\[
\log q(\rho) = \sum_{i=1}^d \log q_i^{(\rho_i)}.
\]
By Lemma~\ref{lem:top-k-reduction}, some optimal valid draft tree is contained in \(\mathcal{S}_K\). It is therefore enough to show that Algorithm~\ref{alg:tree-build} returns the \(B\) highest-scoring elements of \(\mathcal{S}_K\), in nonincreasing order of \(q(\rho)\).

For every \(\rho=(\rho_1,\dots,\rho_d) \in \mathcal{S}_K\) other than \((1)\), define its predecessor by
\[
\operatorname{pred}(\rho)=
\begin{cases}
(\rho_1,\dots,\rho_{d-1},\rho_d-1), & \rho_d > 1, \\
(\rho_1,\dots,\rho_{d-1}), & \rho_d=1,\ d>1.
\end{cases}
\]
This predecessor is unique. If \(\rho_d>1\), then popping \(\operatorname{pred}(\rho)\) generates \(\rho\) as the next sibling. If \(\rho_d=1\) and \(d>1\), then popping \(\operatorname{pred}(\rho)\) generates \(\rho\) as the first child.

We next show that the heap pops tuples in nonincreasing order of \(q(\rho)\). The first pop is \((1)\), since it is the unique initial heap element. Now consider any later iteration. At that point, the heap contains exactly the unpopped tuples whose predecessor has already been popped. Fix any unpopped tuple \(\rho\), and let \(\bar{\rho}\) be the first tuple on the predecessor chain from \(\rho\) back toward \((1)\) whose predecessor has already been popped. Then \(\bar{\rho}\) is in the heap. Moreover,
\[
q(\operatorname{pred}(\rho)) \ge q(\rho)
\]
for every noninitial tuple \(\rho\): in the sibling case this replaces \(q_d^{(\rho_d)}\) by the larger quantity \(q_d^{(\rho_d-1)}\), and in the child case it removes the factor \(q_d^{(1)} < 1\). Applying this inequality repeatedly along the chain from \(\rho\) to \(\bar{\rho}\) gives \(q(\bar{\rho}) \ge q(\rho)\). Therefore the maximum-probability unpopped tuple is always present in the heap, and each pop removes a highest-probability remaining tuple. Repeating this argument proves that the pop sequence is nonincreasing in \(q(\rho)\).

After \(B\) pops, the returned set \(T\) therefore consists of the top-\(B\) prefixes. It remains to show that \(T\) is valid. Let \(\rho=(\rho_1,\dots,\rho_d) \in T\) with \(d \ge 2\). Repeatedly applying \(\operatorname{pred}\) to \(\rho\) decreases the last coordinate until reaching \((\rho_1,\dots,\rho_{d-1},1)\), and one more predecessor step yields the parent tuple \((\rho_1,\dots,\rho_{d-1})\). Since each tuple is generated only after its predecessor is popped, every tuple on this chain was popped earlier and therefore belongs to \(T\). Hence the parent of every returned nonroot prefix is also returned, so \(T\) is prefix-closed.

After \(B\) pops, the returned set \(T\) is exactly the top-\(B\) elements of \(\mathcal{S}_K\). By Proposition~\ref{prop:top-b-tree}, \(T\) is a valid draft tree and is optimal among trees contained in \(\mathcal{S}_K\). By Lemma~\ref{lem:top-k-reduction}, some global optimum is contained in \(\mathcal{S}_K\). Hence \(T\) is optimal for \eqref{eq:surrogate-objective}.

Finally, Proposition~\ref{prop:surrogate-additive} implies that, within \(\mathcal{S}_K\), the surrogate objective is maximized by taking the highest-probability prefixes. Since Algorithm~\ref{alg:tree-build} returns exactly those prefixes, and some global optimum lies in \(\mathcal{S}_K\), the returned tree is optimal for \eqref{eq:surrogate-objective} under node budget \(B\).
\end{proof}

\section{Benchmark details}
\label{app:benchmark-details}
All runs use block size 16, DDTree node budgets \(\{16, 32, 64, 128, 256, 512, 1024\}\), temperatures 0.0 and 1.0, a maximum of 2048 new tokens, and bfloat16 inference. We run the benchmark on 8 H200 GPUs and shard the evaluation set across workers. Before timing the benchmark loop, we run a short warmup prompt through the autoregressive baseline, vanilla DFlash, and each DDTree budget so that one time setup costs are excluded from the reported measurements.

For DDTree, the target model uses standard PyTorch scaled dot product attention, because FlashAttention-2~\citep{dao2024flashattention} does not support the required tree attention pattern. The DFlash drafter itself still uses FlashAttention-2. For fairness, for the autoregressive baseline and vanilla DFlash, we evaluate the target model with both standard PyTorch scaled dot product attention and FlashAttention-2, and report the faster result, which can only improve these baselines relative to DDTree.

Table~\ref{tab:dataset-sample-counts} lists the number of evaluated examples for each dataset. We follow the original DFlash benchmark setup for these sample counts.

\begin{table}[h]
\centering
\small
\caption{Number of evaluated examples per dataset in the benchmark suite.}
\label{tab:dataset-sample-counts}
\begin{tabular}{lr}
\toprule
\textbf{Dataset} & \textbf{Examples} \\
\midrule
AIME 2024 & 30 \\
AIME 2025 & 30 \\
Alpaca & 128 \\
GSM8K & 128 \\
HumanEval & 164 \\
LiveCodeBench & 128 \\
MATH-500 & 128 \\
MBPP & 128 \\
MT-Bench & 80 \\
SWE-bench Lite & 128 \\
\bottomrule
\end{tabular}
\end{table}

\end{document}